\newtheorem{proposition}{Proposition}
\def\b{\ensuremath\boldsymbol}
\icmltitlerunning{Stochastic Neighbor Embedding with Gaussian and Student-t Distributions: Tutorial and Survey}
\begin{document}

\AddToShipoutPictureBG*{%
  \AtPageUpperLeft{%
    \setlength\unitlength{1in}%
    \hspace*{\dimexpr0.5\paperwidth\relax}
    \makebox(0,-0.75)[c]{\normalsize {\color{black} To appear as a part of an upcoming academic book on dimensionality reduction and manifold learning.}}
    }}

\twocolumn[
\icmltitle{Stochastic Neighbor Embedding \\with Gaussian and Student-t Distributions: Tutorial and Survey}

\icmlauthor{Benyamin Ghojogh}{bghojogh@uwaterloo.ca}
\icmladdress{Department of Electrical and Computer Engineering, 
\\Machine Learning Laboratory, University of Waterloo, Waterloo, ON, Canada}
\icmlauthor{Ali Ghodsi}{ali.ghodsi@uwaterloo.ca}
\icmladdress{Department of Statistics and Actuarial Science \& David R. Cheriton School of Computer Science, 
\\Data Analytics Laboratory, University of Waterloo, Waterloo, ON, Canada}
\icmlauthor{Fakhri Karray}{karray@uwaterloo.ca}
\icmladdress{Department of Electrical and Computer Engineering, 
\\Centre for Pattern Analysis and Machine Intelligence, University of Waterloo, Waterloo, ON, Canada}
\icmlauthor{Mark Crowley}{mcrowley@uwaterloo.ca}
\icmladdress{Department of Electrical and Computer Engineering, 
\\Machine Learning Laboratory, University of Waterloo, Waterloo, ON, Canada}

\icmlkeywords{Tutorial, Locally Linear Embedding}

\vskip 0.3in
]

\begin{abstract}
Stochastic Neighbor Embedding (SNE) is a manifold learning and dimensionality reduction method with a probabilistic approach. In SNE, every point is consider to be the neighbor of all other points with some probability and this probability is tried to be preserved in the embedding space. SNE considers Gaussian distribution for the probability in both the input and embedding spaces. However, t-SNE uses the Student-t and Gaussian distributions in these spaces, respectively. In this tutorial and survey paper, we explain SNE, symmetric SNE, t-SNE (or Cauchy-SNE), and t-SNE with general degrees of freedom. We also cover the out-of-sample extension and acceleration for these methods.
\end{abstract}

\section{Introduction}

Stochastic Neighbor Embedding (SNE) \cite{hinton2003stochastic} is a manifold learning and dimensionality reduction method which can be used for feature extraction \cite{ghojogh2019feature}. It has a probabilistic approach. It fits the data in the embedding space locally hoping to preserve the global structure of data \cite{saul2003think}. 
The idea of SNE is to consider every point as neighbors of other points with some probability where the closer points are neighbors with higher probability. Therefore, rather than considering $k$ nearest neighbors in a binary manner (whether being neighbors or not), it considers neighbors in a stochastic way (for how probable it is to be neighbors). 
It tries to preserve the probability of neighborhoods in the low-dimensional embedding space. 
It is noteworthy that there exist some other similar probabilistic dimensionality reduction methods which make use of Gaussian distribution for neighborhood. Some examples are Neighborhood Component Analysis (NCA) \cite{goldberger2005neighbourhood}, deep NCA \cite{liu2020deep}, and Proxy-NCA \cite{movshovitz2017no}. 

SNE uses the Gaussian distribution for neighbors in both the input and embedding spaces. The Student-t distributed SNE, or so-called t-SNE \cite{maaten2008visualizing}, considers the Student-t and Gaussian distributions in the input and embedding spaces, respectively. The reason of using Student-t distribution in t-SNE is because of its heavier tails so it can include more information from the high-dimensional data. 
t-SNE is one of the state-of-the-art methods for data visualization; for example, it has been used for DNA and single-cell data visualization \cite{kobak2019art}.
In this paper, we explain SNE, symmetric SNE, t-SNE (or Cauchy-SNE), t-SNE with general degrees of freedom, their out-of-sample extensions, and their accelerations. We also show the results of simulations for visualization of embeddings. 

The goal of SNE is to embed the high-dimensional data $\{\b{x}_i\}_{i=1}^n$ into the lower dimensional data $\{\b{y}_i\}_{i=1}^n$ where $n$ is the number of data points. We denote the dimensionality of high- and low-dimensional spaces by $d$ and $p$, respectively, i.e. $\b{x}_i \in \mathbb{R}^d$ and $\b{y}_i \in \mathbb{R}^p$. We usually have $p \ll d$. For data visualization, we have $p \in \{2,3\}$. 

The remainder of this paper is organized as follows. In Sections \ref{section_SNE} and \ref{section_symmetric_SNE}, we explain SNE and symmetric SNE, respectively. Section \ref{section_tSNE} introduces the crowding problem and the t-SNE or Cauchy-SNE method. The out-of-sample embedding and acceleration of these methods are introduced in Sections \ref{section_outOfSample} and \ref{section_acceleration}, respectively. 
Recent improvements of t-SNE are briefly mentioned in Section \ref{section_improvement_of_tSNE}.
Finally, Section \ref{section_conclusion} concludes the paper. 

\section{Stochastic Neighbor Embedding (SNE)}\label{section_SNE}

In \textit{SNE} \cite{hinton2003stochastic}, we consider a Gaussian probability around every point $\b{x}_i$ where the point $\b{x}_i$ is on the mean and the distribution is for probability of accepting any other point as the neighbor of $\b{x}_i$; the farther points are neighbors with less probability. 
Hence, the variable is distance, denoted by $d \in \mathbb{R}$, and the Gaussian probability is:
\begin{align}
f(d) = \frac{1}{\sqrt{2 \pi \sigma^2}} \exp(-\frac{d^2}{2 \sigma^2}),
\end{align}
where the mean of distribution is assumed to be zero. 
The fixed multiplier $\frac{1}{\sqrt{2 \pi \sigma^2}}$ can be dropped; however, $\exp(-d^2 / 2 \sigma^2)$ does not add (integrate) to one and thus it is not a probability density function. In order to tackle this problem, we can do a trick and divide $\exp(-d^2 / 2 \sigma^2)$ by the summation of all possible values of $\exp(-d^2 / 2 \sigma^2)$ to have a \textit{softmax} function. Therefore, the probability that the point $\b{x}_i \in \mathbb{R}^d$ takes $\b{x}_j \in \mathbb{R}^d$ as its neighbor is:
\begin{align}\label{equation_SNE_p}
\mathbb{R} \ni p_{ij} := \frac{\exp(-d_{ij}^2)}{\sum_{k \neq i}\exp(-d_{ik}^2)},
\end{align}
where:
\begin{align}
\mathbb{R} \ni d_{ij}^2 := \frac{||\b{x}_i - \b{x}_j||_2^2}{2\sigma_i^2}.
\end{align}
Note that this trick is also used for $q_{ij}$ in SNE and also for $p_{ij}$ and $q_{ij}$ in t-SNE (and its variants) as we will see later. 

It is noteworthy that the mentioned trick is also used in other methods such as Continuous Bag-of-Word (CBOW) model of Word2Vec \cite{mikolov2013efficient,rong2014word2vec}, Euclidean Embedding \cite{globerson2007euclidean}, and Parametric Embedding \cite{iwata2005parametric}. In this trick, the summation in the denominator can get very time-consuming especially when the dataset (or corpus for Word2Vec) is large. This plus the slow pace of gradient descent \cite{boyd2004convex} are the reasons that SNE, t-SNE, and Word2Vec are very slow and even infeasible for large datasets. The Word2Vec tackled the problem of the slow pace by introducing Negative Sampling Skip-Gram model \cite{mikolov2013distributed,goldberg2014word2vec} which uses logistic function similar to the approach of logistic regression \cite{kleinbaum2002logistic}. In logistic function, we deal with inner product (similarity) of data points rather than distance of data points and there is no summation in the denominator.
The Negative Sampling Skip-Gram model also uses Newton's method, which is much faster than gradient descent \cite{boyd2004convex}, similar to logistic regression.

The $\sigma_i^2$ is the variance which we consider for the Gaussian distribution used for the $\b{x}_i$.
It can be set to a fixed number or determined by a binary search to make the entropy of distribution some specific value \cite{hinton2003stochastic}. Note that according to the distribution of data in the input space, the best value for the variance of Gaussian distributions can be found. 

In the low-dimensional embedding space, we again consider a Gaussian probability distribution for the point $\b{y}_i \in \mathbb{R}^p$ to take $\b{y}_j \in \mathbb{R}^p$ as its neighbor:
\begin{align}\label{equation_SNE_q}
\mathbb{R} \ni q_{ij} := \frac{\exp(-z_{ij}^2)}{\sum_{k \neq i}\exp(-z_{ik}^2)},
\end{align}
where:
\begin{align}\label{equation_SNE_z_squared}
\mathbb{R} \ni z_{ij}^2 := ||\b{y}_i - \b{y}_j||_2^2.
\end{align}
It is noteworthy that the variance of distribution is not used (or is set to $\sigma_i^2 = 0.5$ to cancel $2$ in the denominator) because the variance of distribution in the embedding space is the choice of algorithm.

We want the probability distributions in both the input and embedded spaces to be as similar as possible; therefore, the cost function to be minimized can be summation of the Kullback-Leibler (KL) divergences \cite{kullback1997information} over the $n$ points:
\begin{align}
\mathbb{R} \ni c_1 := \sum_{i=1}^n \text{KL}(P_i||Q_i) = \sum_{i=1}^n \sum_{j=1,j \neq i}^{n} p_{ij} \log (\frac{p_{ij}}{q_{ij}}),
\end{align}
where $p_{ij}$ and $q_{ij}$ are the Eqs. (\ref{equation_SNE_p}) and (\ref{equation_SNE_q}).
Note that divergences other than the KL divergence can be used for the SNE optimization; e.g., see \cite{im2018stochastic}.
\begin{proposition}
The gradient of $c_1$ with respect to $\b{y}_i$ is:
\begin{align}
\mathbb{R}^p \ni \frac{\partial c_1}{\partial \b{y}_i} = 2 \sum_{j=1}^n (p_{ij} - q_{ij} + p_{ji} - q_{ji}) (\b{y}_i - \b{y}_j),
\end{align}
where $p_{ij}$ and $q_{ij}$ are the Eqs. (\ref{equation_SNE_p}) and (\ref{equation_SNE_q}), and $p_{ii} = q_{ii} = 0$.
\end{proposition}
\begin{proof}
Proof is inspired by \cite{maaten2008visualizing}. Let:
\begin{align}\label{equation_SNE_r}
\mathbb{R} \ni r_{ij} := z_{ij}^2 = ||\b{y}_i - \b{y}_j||_2^2.
\end{align}
By changing $\b{y}_i$, we only have change impact in $z_{ij}$ and $z_{ji}$ (or $r_{ij}$ and $r_{ji}$) for all $j$'s.
According to chain rule, we have:
\begin{align*}
\mathbb{R}^p \ni \frac{\partial c_1}{\partial \b{y}_i} = \sum_j \big(\frac{\partial c_1}{\partial r_{ij}} \frac{\partial r_{ij}}{\partial \b{y}_i} + \frac{\partial c_1}{\partial r_{ji}} \frac{\partial r_{ji}}{\partial \b{y}_i}\big).
\end{align*}
According to Eq. (\ref{equation_SNE_r}), we have:
\begin{align*}
& r_{ij} = ||\b{y}_i - \b{y}_j||_2^2 \implies \frac{\partial r_{ij}}{\partial \b{y}_i} = 2(\b{y}_i - \b{y}_j), \\
& r_{ji} = ||\b{y}_j - \b{y}_i||_2^2 = ||\b{y}_i - \b{y}_j||_2^2 \implies \frac{\partial r_{ji}}{\partial \b{y}_i} = 2(\b{y}_i - \b{y}_j).
\end{align*}
Therefore:
\begin{align}\label{equation_SNE_deriv_c1_y}
\therefore ~~~~ \frac{\partial c_1}{\partial \b{y}_i} = 2 \sum_j \big(\frac{\partial c_1}{\partial r_{ij}} + \frac{\partial c_1}{\partial r_{ji}}\big)(\b{y}_i - \b{y}_j).
\end{align}
The cost function can be re-written as:
\begin{align*}
c_1 &= \sum_{k} \sum_{l\neq k} p_{kl} \log (\frac{p_{kl}}{q_{kl}}) = \sum_{k \neq l} p_{kl} \log (\frac{p_{kl}}{q_{kl}}) \\
&= \sum_{k \neq l} \big(p_{kl} \log (p_{kl}) - p_{kl} \log (q_{kl}) \big),
\end{align*}
whose first term is a constant with respect to $q_{kl}$ and thus to $r_{kl}$. We have:
\begin{align*}
\mathbb{R} \ni \frac{\partial c_1}{\partial r_{ij}} = - \sum_{k \neq l} p_{kl} \frac{\partial (\log (q_{kl}))}{\partial r_{ij}}.
\end{align*}
According to Eq. (\ref{equation_SNE_q}), the $q_{kl}$ is:
\begin{align*}
q_{kl} := \frac{\exp(-z_{kl}^2)}{\sum_{k \neq f}\exp(-z_{kf}^2)} = \frac{\exp(-r_{kl})}{\sum_{k \neq f}\exp(-r_{kf})}.
\end{align*}
We take the denominator of $q_{kl}$ as:
\begin{align}\label{equation_SNE_beta}
\beta := \sum_{k \neq f} \exp(- z_{kf}^2) = \sum_{k \neq f} \exp(- r_{kf}).
\end{align}
We have $\log (q_{kl}) = \log (q_{kl}) + \log \beta - \log \beta = \log (q_{kl} \beta) - \log \beta$. Therefore:
\begin{align*}
\therefore ~~~ \frac{\partial c_1}{\partial r_{ij}} &= - \sum_{k \neq l} p_{kl} \frac{\partial \big(\log (q_{kl} \beta) - \log \beta\big)}{\partial r_{ij}} \\
&= - \sum_{k \neq l} p_{kl} \bigg[\frac{\partial \big(\log (q_{kl} \beta)\big)}{\partial r_{ij}} - \frac{\partial \big(\log \beta\big)}{\partial r_{ij}}\bigg] \\
&= - \sum_{k \neq l} p_{kl} \bigg[\frac{1}{q_{kl} \beta}\frac{\partial \big( q_{kl} \beta\big)}{\partial r_{ij}} - \frac{1}{\beta}\frac{\partial \beta}{\partial r_{ij}}\bigg].
\end{align*}
The $q_{kl} \beta$ is:
\begin{align*}
q_{kl} \beta &= \frac{\exp(-r_{kl})}{\sum_{f \neq k}\exp(-r_{kf})} \times \sum_{k \neq f} \exp(- r_{kf}) \\
&= \exp(-r_{kl}).
\end{align*}
Therefore, we have:
\begin{align*}
\therefore ~~~ \frac{\partial c_1}{\partial r_{ij}} &= - \sum_{k \neq l} p_{kl} \bigg[\frac{1}{q_{kl} \beta}\frac{\partial \big( \exp(-r_{kl}) \big)}{\partial r_{ij}} - \frac{1}{\beta}\frac{\partial \beta}{\partial r_{ij}}\bigg].
\end{align*}
The $\partial \big( \exp(-r_{kl}) \big)/\partial r_{ij}$ is non-zero for only $k=i$ and $l=j$; therefore:
\begin{align*}
\frac{\partial \big( \exp(-r_{ij}) \big)}{\partial r_{ij}} &= - \exp(-r_{ij}), \\
\frac{\partial \beta}{\partial r_{ij}} &= \frac{\partial \sum_{k \neq f} \exp(- r_{kf})}{\partial r_{ij}} = \frac{\partial \exp(- r_{ij})}{\partial r_{ij}} \\
&= - \exp(- r_{ij}).
\end{align*}
Therefore:
\begin{alignat*}{2}
\therefore ~~~ \frac{\partial c_1}{\partial r_{ij}} &= &&- \bigg( p_{ij} \Big[\frac{-1}{q_{ij} \beta} \exp(-r_{ij})\Big] + 0 + \dots + 0 \bigg) \\
& &&- \sum_{k \neq l} p_{k l} \Big[\frac{1}{\beta} \exp(- r_{ij}) \Big].
\end{alignat*}
We have $\sum_{k \neq l} p_{k l} = 1$ because summation of all possible probabilities is one. Thus:
\begin{align}
\frac{\partial c_1}{\partial r_{ij}} &= -  p_{ij} \Big[\frac{-1}{q_{ij} \beta} \exp(-r_{ij})\Big] - \Big[\frac{1}{\beta} \exp(- r_{ij}) \Big] \nonumber \\
&= \underbrace{\frac{\exp(- r_{ij})}{\beta}}_{=q_{ij}} \Big[\frac{p_{ij}}{q_{ij}} - 1\Big] = p_{ij}  - q_{ij}. \label{equation_SNE_derivative_r_ij}
\end{align}
Similarly, we have:
\begin{align}\label{equation_SNE_derivative_r_ji}
\frac{\partial c_1}{\partial r_{ji}} = p_{ji}  - q_{ji}.
\end{align}
Substituting the obtained derivatives in Eq. (\ref{equation_SNE_deriv_c1_y}) gives us:
\begin{align*}
\frac{\partial c_1}{\partial \b{y}_i} = 2 \sum_j (p_{ij} - q_{ij} + p_{ji} - q_{ji})(\b{y}_i - \b{y}_j),
\end{align*}
which is the gradient mentioned in the proposition. Q.E.D.
\end{proof}

The update of the embedded point $\b{y}_i$ is done by gradient descent. Every iteration is:
\begin{equation}\label{equation_SNE_gradient_descent}
\begin{aligned}
& \Delta \b{y}_i^{(t)} := - \eta\, \frac{\partial c_1}{\partial \b{y}_i} + \alpha(t)\, \Delta \b{y}_i^{(t-1)}, \\
& \b{y}_i^{(t)} := \b{y}_i^{(t-1)} + \Delta \b{y}_i^{(t)},
\end{aligned}
\end{equation}
where momentum is used for better convergence \cite{qian1999momentum}. 
The $\alpha(t)$ is the momentum. It can be smaller for initial iterations and larger for further iterations. For example, we can have \cite{maaten2008visualizing}:
\begin{align}\label{equation_alpha_momentum}
\alpha(t) := \left\{
    \begin{array}{ll}
        0.5 & t < 250, \\
        0.8 & t \geq 250.
    \end{array}
\right.
\end{align}
In the original paper of SNE \cite{hinton2003stochastic}, the momentum term is not mentioned but it is suggested in \cite{maaten2008visualizing}.

The $\eta$ is the learning rate which can be a small positive constant (e.g., $\eta=0.1$) or can be updated adaptively according to \cite{jacobs1988increased}.

Moreover, in both \cite{hinton2003stochastic} and \cite{maaten2008visualizing}, it is mentioned that in SNE we should add some Gaussian noise (random jitter) to the solution of the first iterations before going to the next iterations. It helps avoiding the local optimum solutions.

\section{Symmetric Stochastic Neighbor Embedding}\label{section_symmetric_SNE}

In \textit{symmetric SNE} \cite{maaten2008visualizing}, we consider a Gaussian probability around every point $\b{x}_i$.
The probability that the point $\b{x}_i \in \mathbb{R}^d$ takes $\b{x}_j \in \mathbb{R}^d$ as its neighbor is:
\begin{align}\label{equation_sym_SNE_p_notUsed}
\mathbb{R} \ni p_{ij} := \frac{\exp(-d_{ij}^2)}{\sum_{k \neq l}\exp(-d_{kl}^2)},
\end{align}
where:
\begin{align}
\mathbb{R} \ni d_{ij}^2 := \frac{||\b{x}_i - \b{x}_j||_2^2}{2\sigma_i^2}.
\end{align}
Note that the denominator of Eq. (\ref{equation_sym_SNE_p_notUsed}) for all points is fixed and thus it is symmetric for $i$ and $j$. Compare this with Eq. (\ref{equation_SNE_p}) which is not symmetric.

The $\sigma_i^2$ is the variance which we consider for the Gaussian distribution used for the $\b{x}_i$.
It can be set to a fixed number or determined by a binary search to make the entropy of distribution some specific value \cite{hinton2003stochastic}.

The Eq. (\ref{equation_sym_SNE_p_notUsed}) has a problem with outliers. If the point $\b{x}_i$ is an outlier, its $p_{ij}$ will be extremely small because the denominator is fixed for every point and numerator will be small for the outlier. However, If we use Eq. (\ref{equation_SNE_p}) for $p_{ij}$, the denominator for all the points is not the same and therefore, the denominator for an outlier will also be small waving out the problem of small numerator. For this mentioned problem, we do not use Eq. (\ref{equation_sym_SNE_p_notUsed}) and rather we use:
\begin{align}\label{equation_sym_SNE_p}
\mathbb{R} \ni p_{ij} := \frac{p_{i|j} + p_{j|i}}{2n},
\end{align}
where:
\begin{align}\label{equation_sym_SNE_p_conditional}
\mathbb{R} \ni p_{j|i} := \frac{\exp(-d_{ij}^2)}{\sum_{k \neq i}\exp(-d_{ik}^2)},
\end{align}
is the probability that $\b{x}_i \in \mathbb{R}^d$ takes $\b{x}_j \in \mathbb{R}^d$ as its neighbor.

In the low-dimensional embedding space, we consider a Gaussian probability distribution for the point $\b{y}_i \in \mathbb{R}^p$ to take $\b{y}_j \in \mathbb{R}^p$ as its neighbor and we make it symmetric (fixed denominator for all points):
\begin{align}\label{equation_sym_SNE_q}
\mathbb{R} \ni q_{ij} := \frac{\exp(-z_{ij}^2)}{\sum_{k \neq l}\exp(-z_{kl}^2)},
\end{align}
where:
\begin{align}\label{equation_sym_SNE_z_squared}
\mathbb{R} \ni z_{ij}^2 := ||\b{y}_i - \b{y}_j||_2^2.
\end{align}
Note that the Eq. (\ref{equation_sym_SNE_q}) does not have the problem of outliers as in Eq. (\ref{equation_sym_SNE_p_notUsed}) because even for an outlier, the embedded points are initialized close together and not far.

We want the probability distributions in both the input and embedded spaces to be as similar as possible; therefore, the cost function to be minimized can be summation of the Kullback-Leibler (KL) divergences \cite{kullback1997information} over the $n$ points:
\begin{align}
\mathbb{R} \ni c_2 := \sum_{i=1}^n \text{KL}(P_i||Q_i) = \sum_{i=1}^n \sum_{j=1,j \neq i}^n p_{ij} \log (\frac{p_{ij}}{q_{ij}}),
\end{align}
where $p_{ij}$ and $q_{ij}$ are the Eqs. (\ref{equation_sym_SNE_p}) and (\ref{equation_sym_SNE_q}).
\begin{proposition}
The gradient of $c_2$ with respect to $\b{y}_i$ is:
\begin{align}
\mathbb{R}^p \ni \frac{\partial c_2}{\partial \b{y}_i} = 4 \sum_{j=1}^n (p_{ij} - q_{ij}) (\b{y}_i - \b{y}_j),
\end{align}
where $p_{ij}$ and $q_{ij}$ are the Eqs. (\ref{equation_sym_SNE_p}) and (\ref{equation_sym_SNE_q}), and $p_{ii} = q_{ii} = 0$.
\end{proposition}
\begin{proof}
Proof is inspired by \cite{maaten2008visualizing}. 

Similar to Eq. (\ref{equation_SNE_deriv_c1_y}), we have:
\begin{align}\label{equation_sym_SNE_deriv_c1_y}
\frac{\partial c_2}{\partial \b{y}_i} = 2 \sum_j \big(\frac{\partial c_2}{\partial r_{ij}} + \frac{\partial c_2}{\partial r_{ji}}\big)(\b{y}_i - \b{y}_j).
\end{align}
Similar to the derivation of Eqs. (\ref{equation_SNE_derivative_r_ij}) and (\ref{equation_SNE_derivative_r_ji}), we can derive:
\begin{align}
&\frac{\partial c_2}{\partial r_{ij}} = p_{ij}  - q_{ij}, \text{   and} \label{equation_sym_SNE_derivative_r_ij} \\
&\frac{\partial c_2}{\partial r_{ji}} = p_{ji}  - q_{ji}, \nonumber
\end{align}
respectively. 
In the symmetric SNE, we have:
\begin{align}\label{equation_sym_SNE_derivative_r_ji}
\frac{\partial c_2}{\partial r_{ji}} = p_{ji}  - q_{ji} \overset{(a)}{=} p_{ij}  - q_{ij},
\end{align}
where $(a)$ is because in symmetric SNE, the $p_{ij}$ and $q_{ij}$ are symmetric for $i$ and $j$ according to Eqs. (\ref{equation_sym_SNE_p}) and (\ref{equation_sym_SNE_q}).

Substituting Eqs. (\ref{equation_sym_SNE_derivative_r_ij}) and (\ref{equation_sym_SNE_derivative_r_ji}) in Eq. (\ref{equation_sym_SNE_deriv_c1_y}) gives us:
\begin{align*}
\frac{\partial c_2}{\partial \b{y}_i} = 4 \sum_j (p_{ij} - q_{ij})(\b{y}_i - \b{y}_j),
\end{align*}
which is the gradient mentioned in the proposition. Q.E.D.
\end{proof}

The update of the embedded point $\b{y}_i$ is done by gradient descent whose every iteration is as Eq. (\ref{equation_SNE_gradient_descent}) where $c_1$ is replaced by $c_2$.
Note that the momentum term can be omitted in the symmetric SNE.
Like in SNE, in symmetric SNE, we should add some Gaussian noise (random jitter) to the solution of the first iterations before going to the next iterations. It helps avoiding the local optimum solutions.



\section{t-distributed Stochastic Neighbor Embedding (t-SNE)}\label{section_tSNE}

\subsection{The Crowding Problem}\label{section_crowding_problem}

In SNE \cite{hinton2003stochastic}, we are considering Gaussian distribution for both input and embedded spaces. That is okay for the input space because it already has a high dimensionality. However, when we embed the high-dimensional data into a low-dimensional space, it is very hard to fit the information of all the points in the same neighborhood area. For better clarification, suppose the dimensionality is like the size of a room. In high dimensionality, we have a large hall including a huge crowd of people. Now, we want to fit all the people into a small room; of course, we cannot! This problem is referred to as the \textit{crowding problem}. 


The main idea of \textit{t-SNE} \cite{maaten2008visualizing} is addressing the crowding problem which exists in SNE \cite{hinton2003stochastic}. 
In the example of fitting people in a room, t-SNE enlarges the room to solve the crowding problem. 
Therefore, in the formulation of t-SNE, we use Student-t distribution \cite{student1908probable} rather than Gaussian distribution for the low-dimensional embedded space. This is because the Student-t distribution has heavier tails than Gaussian distribution, which is like a larger room, and can fit the information of high dimensional data in the low dimensional embedding space. 

As we will see later, the $q_{ij}$ in t-SNE is:
\begin{align*}
q_{ij} = \frac{(1 + z_{ij}^2)^{-1}}{\sum_{k \neq l}(1 + z_{kl}^2)^{-1}},
\end{align*}
which is based on the standard Cauchy distribution:
\begin{align}
f(z) = \frac{1}{\pi(1 + z^2)},
\end{align}
where $\pi$ is canceled from the numerator and the normalizing denominator in $q_{ij}$ (see the explanations of this trick in Section \ref{section_SNE}).

If the Student-t distribution \cite{student1908probable} with the general degrees of freedom $\delta$ is used, we would have:
\begin{align}\label{equation_t_SNE_t_distribution}
f(z) = \frac{\Gamma(\frac{\delta+1}{2})}{\sqrt{\delta \times \pi}~~\Gamma(\frac{\delta}{2})} (1 + \frac{z^2}{\delta})^{-\frac{\delta+1}{2}},
\end{align}
where $\Gamma$ is the gamma function.
Cancelling out the scaling factors from the numerator and denominator, we would have \cite{van2009learning}:
\begin{align}\label{equation_t_SNE_q_generalDegrees}
q_{ij} = \frac{(1 + z_{ij}^2/\delta)^{-(\delta+1)/2}}{\sum_{k \neq l}(1 + z_{kl}^2/\delta)^{-(\delta+1)/2}}.
\end{align}
However, as the first degree of freedom has the heaviest tails amongst different degrees of freedom, it is the most suitable for the crowding problem; hence, we use the first degree of freedom which is the Cauchy distribution.
Note that the t-SNE algorithm, which uses the Cauchy distribution, may also be  called the Cauchy-SNE. 
Later, t-SNE with general degrees of freedom was proposed \cite{van2009learning}, which we explain in Section \ref{section_tSNE_general_degrees}. 

\subsection{t-SNE Formulation}

In t-SNE \cite{maaten2008visualizing}, we consider a Gaussian probability around every point $\b{x}_i$ in the input space because the crowding problem does not exist in the high dimensional data.
The probability that the point $\b{x}_i \in \mathbb{R}^d$ takes $\b{x}_j \in \mathbb{R}^d$ as its neighbor is:
\begin{align}\label{equation_t_SNE_p_conditional}
\mathbb{R} \ni p_{j|i} := \frac{\exp(-d_{ij}^2)}{\sum_{k \neq i}\exp(-d_{ik}^2)},
\end{align}
where:
\begin{align}
\mathbb{R} \ni d_{ij}^2 := \frac{||\b{x}_i - \b{x}_j||_2^2}{2\sigma_i^2}.
\end{align}
Note that Eq. (\ref{equation_t_SNE_p_conditional}) is not symmetric for $i$ and $j$ because of the denominator. 
We take the symmetric $p_{ij}$ as the scaled average of $p_{i|j}$ and $p_{j|i}$:
\begin{align}\label{equation_t_SNE_p}
\mathbb{R} \ni p_{ij} := \frac{p_{i|j} + p_{j|i}}{2n}.
\end{align}
In the low-dimensional embedding space, we consider a Student's $t$-distribution with one degree of freedom (Cauchy distribution) for the point $\b{y}_i \in \mathbb{R}^p$ to take $\b{y}_j \in \mathbb{R}^p$ as its neighbor:
\begin{align}\label{equation_t_SNE_q}
\mathbb{R} \ni q_{ij} := \frac{(1 + z_{ij}^2)^{-1}}{\sum_{k \neq l}(1 + z_{kl}^2)^{-1}},
\end{align}
where:
\begin{align}\label{equation_t_SNE_z_squared}
\mathbb{R} \ni z_{ij}^2 := ||\b{y}_i - \b{y}_j||_2^2.
\end{align}
We want the probability distributions in both the input and embedded spaces to be as similar as possible; therefore, the cost function to be minimized can be summation of the Kullback-Leibler (KL) divergences \cite{kullback1997information} over the $n$ points:
\begin{align}\label{equation_t_SNE_cost}
\mathbb{R} \ni c_3 := \sum_{i=1}^n \text{KL}(P_i||Q_i) = \sum_{i=1}^n \sum_{j=1,j \neq i}^n p_{ij} \log (\frac{p_{ij}}{q_{ij}}),
\end{align}
where $p_{ij}$ and $q_{ij}$ are the Eqs. (\ref{equation_t_SNE_p}) and (\ref{equation_t_SNE_q}).
\begin{proposition}
The gradient of $c_3$ with respect to $\b{y}_i$ is:
\begin{align}\label{equation_tSNE_gradient}
\frac{\partial c_3}{\partial \b{y}_i} = 4 \sum_{j=1}^n (p_{ij} - q_{ij})(1 + ||\b{y}_i - \b{y}_j||_2^2)^{-1}(\b{y}_i - \b{y}_j),
\end{align}
where $p_{ij}$ and $q_{ij}$ are the Eqs. (\ref{equation_t_SNE_p}) and (\ref{equation_t_SNE_q}), and $p_{ii} = q_{ii} = 0$.
\end{proposition}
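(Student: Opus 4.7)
The plan is to follow the same two-layer chain rule used in the SNE and symmetric SNE proofs, with the only new ingredient being the Cauchy-type denominator in $q_{ij}$. As before, I would set $r_{ij} := z_{ij}^2 = \|\b{y}_i - \b{y}_j\|_2^2$, note that varying $\b{y}_i$ only perturbs $r_{ij}$ and $r_{ji}$ (treated as separate variables), and use $\partial r_{ij}/\partial \b{y}_i = \partial r_{ji}/\partial \b{y}_i = 2(\b{y}_i - \b{y}_j)$ to reduce the problem to
\begin{align*}
\frac{\partial c_3}{\partial \b{y}_i} = 2 \sum_j \Big(\frac{\partial c_3}{\partial r_{ij}} + \frac{\partial c_3}{\partial r_{ji}}\Big)(\b{y}_i - \b{y}_j),
\end{align*}
exactly as in Eq.~(\ref{equation_SNE_deriv_c1_y}). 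Since $p_{ij}$ and $q_{ij}$ are symmetric in $i,j$ for t-SNE (Eqs.~(\ref{equation_t_SNE_p}) and (\ref{equation_t_SNE_q})), the two partials in the sum will turn out equal, giving the factor of $4$.

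The core of the argument is computing $\partial c_3/\partial r_{ij}$. I would split $c_3 = \sum_{k\neq l} p_{kl}\log p_{kl} - \sum_{k\neq l} p_{kl}\log q_{kl}$ and discard the first (constant) term. Setting $\beta := \sum_{k\neq l}(1+r_{kl})^{-1}$ so that $q_{kl}\beta = (1+r_{kl})^{-1}$, I get the clean decomposition $\log q_{kl} = -\log(1+r_{kl}) - \log \beta$. Differentiating this with respect to $r_{ij}$: the first piece contributes only when $(k,l) = (i,j)$, giving $-(1+r_{ij})^{-1}$; the second piece contributes for every $(k,l)$ and gives $-(1/\beta)\,\partial\beta/\partial r_{ij} = (1+r_{ij})^{-2}/\beta = q_{ij}(1+r_{ij})^{-1}$.

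Putting these together,
\begin{align*}
\frac{\partial c_3}{\partial r_{ij}} = p_{ij}(1+r_{ij})^{-1} - \Big(\sum_{k\neq l} p_{kl}\Big)\, q_{ij}(1+r_{ij})^{-1},
\end{align*}
and since $\sum_{k\neq l} p_{kl} = 1$ this collapses to $(p_{ij}-q_{ij})(1+r_{ij})^{-1}$. The same derivation with $i,j$ swapped gives $\partial c_3/\partial r_{ji} = (p_{ji}-q_{ji})(1+r_{ji})^{-1} = (p_{ij}-q_{ij})(1+r_{ij})^{-1}$ by symmetry. Substituting back into the chain-rule formula produces
\begin{align*}
\frac{\partial c_3}{\partial \b{y}_i} = 4\sum_j (p_{ij}-q_{ij})(1+\|\b{y}_i-\b{y}_j\|_2^2)^{-1}(\b{y}_i-\b{y}_j),
\end{align*}
as claimed.

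I do not expect a serious obstacle: the main subtlety, compared with the Gaussian case, is simply bookkeeping the extra Cauchy factor $(1+r_{ij})^{-1}$ that survives after the $\sum p_{kl}=1$ cancellation. The potential pitfall is miscounting which $(k,l)$ in the sum over $\log q_{kl}$ depends on $r_{ij}$ through the numerator versus through $\beta$; writing $\log q_{kl}$ in the additive form above avoids that confusion, since the numerator term is manifestly zero unless $(k,l)=(i,j)$ while the $\log\beta$ term is handled uniformly.
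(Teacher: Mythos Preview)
Your proposal is correct and follows essentially the same route as the paper: the same chain-rule reduction through $r_{ij}=\|\b{y}_i-\b{y}_j\|_2^2$, the same normalizer $\beta=\sum_{k\neq l}(1+r_{kl})^{-1}$, the same use of $\sum_{k\neq l}p_{kl}=1$, and the same symmetry argument to double the contribution. The only cosmetic difference is that you write $\log q_{kl}=-\log(1+r_{kl})-\log\beta$ directly, whereas the paper passes through $\log(q_{kl}\beta)-\log\beta$; these are identical since $q_{kl}\beta=(1+r_{kl})^{-1}$.
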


\begin{proof}
Proof is according to \cite{maaten2008visualizing}. Let:
\begin{align}\label{equation_t_SNE_r}
\mathbb{R} \ni r_{ij} := z_{ij}^2 = ||\b{y}_i - \b{y}_j||_2^2.
\end{align}
By changing $\b{y}_i$, we only have change impact in $z_{ij}$ and $z_{ji}$ for all $j$'s.
According to chain rule, we have:
\begin{align*}
\mathbb{R}^p \ni \frac{\partial c_3}{\partial \b{y}_i} = \sum_j \big(\frac{\partial c_3}{\partial r_{ij}} \frac{\partial r_{ij}}{\partial \b{y}_i} + \frac{\partial c_3}{\partial r_{ji}} \frac{\partial r_{ji}}{\partial \b{y}_i}\big).
\end{align*}
According to Eq. (\ref{equation_t_SNE_r}), we have:
\begin{align*}
& r_{ij} = ||\b{y}_i - \b{y}_j||_2^2 \implies \frac{\partial r_{ij}}{\partial \b{y}_i} = 2(\b{y}_i - \b{y}_j), \\
& r_{ji} = ||\b{y}_j - \b{y}_i||_2^2 = ||\b{y}_i - \b{y}_j||_2^2 \implies \frac{\partial r_{ji}}{\partial \b{y}_i} = 2(\b{y}_i - \b{y}_j).
\end{align*}
Therefore:
\begin{align}\label{equation_t_SNE_deriv_c1_y}
\therefore ~~~~ \frac{\partial c_3}{\partial \b{y}_i} = 2 \sum_j \big(\frac{\partial c_3}{\partial r_{ij}} + \frac{\partial c_3}{\partial r_{ji}}\big)(\b{y}_i - \b{y}_j).
\end{align}
The cost function can be re-written as:
\begin{align*}
c_3 &= \sum_{k} \sum_{l\neq k} p_{kl} \log (\frac{p_{kl}}{q_{kl}}) = \sum_{k \neq l} p_{kl} \log (\frac{p_{kl}}{q_{kl}}) \\
&= \sum_{k \neq l} \big(p_{kl} \log (p_{kl}) - p_{kl} \log (q_{kl}) \big),
\end{align*}
whose first term is a constant with respect to $q_{kl}$ and thus to $r_{kl}$. We have:
\begin{align*}
\mathbb{R} \ni \frac{\partial c_3}{\partial r_{ij}} = - \sum_{k \neq l} p_{kl} \frac{\partial (\log (q_{kl}))}{\partial r_{ij}}.
\end{align*}
According to Eq. (\ref{equation_t_SNE_q}), the $q_{kl}$ is:
\begin{align*}
q_{kl} := \frac{(1 + z_{kl}^2)^{-1}}{\sum_{m \neq f}(1 + z_{mf}^2)^{-1}}, = \frac{(1 + r_{kl})^{-1}}{\sum_{m \neq f}(1 + r_{mf})^{-1}},
\end{align*}
We take the denominator of $q_{kl}$ as:
\begin{align}\label{equation_t_SNE_beta}
\beta := \sum_{m \neq f}(1 + z_{mf}^2)^{-1} = \sum_{m \neq f}(1 + r_{mf})^{-1}.
\end{align}
We have $\log (q_{kl}) = \log (q_{kl}) + \log \beta - \log \beta = \log (q_{kl} \beta) - \log \beta$. Therefore:
\begin{align*}
\therefore ~~~ \frac{\partial c_3}{\partial r_{ij}} &= - \sum_{k \neq l} p_{kl} \frac{\partial \big(\log (q_{kl} \beta) - \log \beta\big)}{\partial r_{ij}} \\
&= - \sum_{k \neq l} p_{kl} \bigg[\frac{\partial \big(\log (q_{kl} \beta)\big)}{\partial r_{ij}} - \frac{\partial \big(\log \beta\big)}{\partial r_{ij}}\bigg] \\
&= - \sum_{k \neq l} p_{kl} \bigg[\frac{1}{q_{kl} \beta}\frac{\partial \big( q_{kl} \beta\big)}{\partial r_{ij}} - \frac{1}{\beta}\frac{\partial \beta}{\partial r_{ij}}\bigg].
\end{align*}
The $q_{kl} \beta$ is:
\begin{align*}
q_{kl} \beta &= \frac{(1 + r_{kl})^{-1}}{\sum_{m \neq f}(1 + r_{mf})^{-1}} \times \sum_{m \neq f}(1 + r_{mf})^{-1} \\
& = (1 + r_{kl})^{-1}.
\end{align*}
Therefore, we have:
\begin{align*}
\therefore ~~~ \frac{\partial c_3}{\partial r_{ij}} &= - \sum_{k \neq l} p_{kl} \bigg[\frac{1}{q_{kl} \beta}\frac{\partial \big( (1 + r_{kl})^{-1} \big)}{\partial r_{ij}} - \frac{1}{\beta}\frac{\partial \beta}{\partial r_{ij}}\bigg].
\end{align*}
The $\partial \big( (1 + r_{kl})^{-1} \big)/\partial r_{ij}$ is non-zero for only $k=i$ and $l=j$; therefore:
\begin{align*}
\frac{\partial \big( (1 + r_{ij})^{-1} \big)}{\partial r_{ij}} &= - (1 + r_{ij})^{-2}, \\
\frac{\partial \beta}{\partial r_{ij}} &= \frac{\partial \sum_{m \neq f}(1 + r_{mf})^{-1}}{\partial r_{ij}} = \frac{\partial (1 + r_{ij})^{-1}}{\partial r_{ij}} \\
&= - (1 + r_{ij})^{-2}.
\end{align*}
Therefore:
\begin{alignat*}{2}
\therefore ~~~ \frac{\partial c_3}{\partial r_{ij}} &= &&- \bigg( p_{ij} \Big[\frac{-1}{q_{ij} \beta} (1 + r_{ij})^{-2}\Big] + 0 + \dots + 0 \bigg) \\
& &&- \sum_{k \neq l} p_{k l} \Big[\frac{1}{\beta} (1 + r_{ij})^{-2} \Big].
\end{alignat*}
We have $\sum_{k \neq l} p_{kl} = 1$ because summation of all possible probabilities is one. Thus:
\begin{align*}
\frac{\partial c_3}{\partial r_{ij}} &= -  p_{ij} \Big[\frac{-1}{q_{ij} \beta} (1 + r_{ij})^{-2}\Big] - \Big[\frac{1}{\beta} (1 + r_{ij})^{-2} \Big] \\
&= (1 + r_{ij})^{-1} \underbrace{\frac{(1 + r_{ij})^{-1}}{\beta}}_{=q_{ij}} \Big[\frac{p_{ij}}{q_{ij}} - 1\Big] \\
&= (1 + r_{ij})^{-1} (p_{ij}  - q_{ij}).
\end{align*}
Similarly, we have:
\begin{align*}
\frac{\partial c_3}{\partial r_{ji}} = (1 + r_{ji})^{-1} (p_{ji}  - q_{ji}) \overset{(a)}{=} (1 + r_{ij})^{-1} (p_{ij}  - q_{ij}),
\end{align*}
where $(a)$ is because in t-SNE, the $p_{ij}$, $q_{ij}$, and $r_{ij}$ are symmetric for $i$ and $j$ according to Eqs. (\ref{equation_t_SNE_p}), (\ref{equation_t_SNE_q}), and (\ref{equation_t_SNE_r}).

Substituting the obtained derivatives in Eq. (\ref{equation_t_SNE_deriv_c1_y}) gives us:
\begin{align*}
\frac{\partial c_3}{\partial \b{y}_i} = 4 \sum_j (p_{ij} - q_{ij}) (1 + r_{ij})^{-1} (\b{y}_i - \b{y}_j),
\end{align*}
which is the gradient mentioned in the proposition. Q.E.D.

Note that in \cite{maaten2008visualizing}, the proof uses $z_{ij}$ rather than $r_{ij} = z_{ij}^2$ in Eq. (\ref{equation_t_SNE_r}) and the rest of the proof. In our opinion, it is better to use $z_{ij}^2$ rather than $z_{ij}$ for the proof.
\end{proof}

The update of the embedded point $\b{y}_i$ is done by gradient descent whose every iteration is as Eq. (\ref{equation_SNE_gradient_descent}) where $c_1$ is replaced by $c_3$.
For t-SNE, there is no need to add jitter to the solution of initial iterations \cite{maaten2008visualizing} because it is more robust than SNE.
The $\alpha(t)$ is the momentum which can be updated according to Eq. (\ref{equation_alpha_momentum}).
The $\eta$ is the learning rate which can be a small positive constant (e.g., $\eta=0.1$) or can be updated according to \cite{jacobs1988increased} (in \cite{maaten2008visualizing}, the initial $\eta$ is $100$).


Note that in \cite{maaten2008visualizing}, the update of $\b{y}_i^{(t)}$ is $\Delta \b{y}_i^{(t)} := + \eta\, \frac{\partial c_3}{\partial \b{y}_i} + \alpha(t)\, \Delta \b{y}_i^{(t-1)}$ which we think is a typo in that paper because the positive direction of gradient is used in gradient ascent for maximizing and not minimizing the objective function. We also checked the implementation of t-SNE in Python scikit-learn library\footnote{The link is: \texttt{https://github.com/scikit-learn/}
\newline
\texttt{scikit-learn/blob/7b136e9/sklearn/manifold/}
\newline
\texttt{t\_sne.py}} and it was gradient descent and not gradient ascent.

\subsection{Early Exaggeration}

In t-SNE, it is better to multiply all $p_{ij}$'s by a constant (e.g., $4$) in the initial iterations:
\begin{align}
p_{ij} := p_{ij} \times 4,
\end{align}
which is called \textit{early exaggeration}. This heuristic helps the optimization focus on the large $p_{ij}$'s (close neighbors) more in the early iterations. 
This is because large $p_{ij}$'s are affected more by multiplying by $4$ than the small $p_{ij}$'s.
After the neighbours are embedded close to one another, we are free not to do this multiplication any more and let far-away points be embedded using the probabilities without multiplication.
Note that the early exaggeration is optional and not mandatory. 

\section{General Degrees of Freedom in t-SNE}\label{section_tSNE_general_degrees}

We can have general degrees of freedom for Student-t distribution in t-SNE \cite{van2009learning}.
As we saw in Eqs. (\ref{equation_t_SNE_t_distribution}) and (\ref{equation_t_SNE_q_generalDegrees}), we can have any degrees of freedom for $q_{ij}$ (note that $\alpha$ is a positive integer). We repeat Eq. (\ref{equation_t_SNE_q_generalDegrees}) here for more convenience:
\begin{align}\label{equation_q_general_degree}
q_{ij} = \frac{(1 + z_{ij}^2/\delta)^{-(\delta+1)/2}}{\sum_{k \neq l}(1 + z_{kl}^2/\delta)^{-(\delta+1)/2}}.
\end{align}
If $\delta \rightarrow \infty$, the Student-t distribution formulated in Eq. (\ref{equation_t_SNE_t_distribution}) tends to Gaussian distribution used in SNE \cite{hinton2003stochastic}. 
SNE and t-SNE use degrees $\delta \rightarrow \infty$ and $\delta = 1$ in Eq. (\ref{equation_q_general_degree}), respectively. Note that the kernel $q_{ij}$ in the low-dimensional space has no need to be a probability distribution necessarily, but it is enough for it to be a decaying function. It has been shown in \cite{kobak2019heavy} the degree $\delta < 1$ works properly well for embedding.

There are three ways to determine $\delta$ \cite{van2009learning}: 

\begin{enumerate}

\item We can set $\delta$ to be fixed. For example, $\delta=1$ is used in the original t-SNE \cite{maaten2008visualizing} which uses the Cauchy distribution in Eq. (\ref{equation_t_SNE_q}). 

\item The problem of the first approach is not considering the relation of the crowding problem with the dimensionality of the embedded space. Recall the crowding problem discussed in Section \ref{section_crowding_problem}. 
On the one hand, as Eq. (\ref{equation_t_SNE_t_distribution}) shows, the degree of freedom is in the power so the tail thickness of Student-t distribution decreases exponentially with $\delta$. 
On the other hand, the volume of a hyper-sphere grows exponentially with the dimension; for example, in two and three dimensions, the volume is $\pi r^2$ and $(4/3) \pi r^3$, respectively, where $r$ is the radius. 
The crowding volume in the embedded space to store the embedded data points is $\propto \pi r^h$ and grows exponentially with $h$. Therefore, the relation of $\delta$ and $h$ (dimensionality of embedded space) is linear, i.e., $h \propto \delta$. In order to be consistent with the original t-SNE \cite{maaten2008visualizing}, we take $\delta = h - 1$ which gives $\delta=1$ for $h=2$ \cite{van2009learning}.

\item The problem of the second approach is that $\delta$ might not ``only'' depend on $h$. In this approach, we find the best $\alpha$ which minimizes the cost $c_3$, i.e. Eq. (\ref{equation_t_SNE_cost}), \cite{van2009learning} where $p_{ij}$ is obtained using Eqs. (\ref{equation_t_SNE_p_conditional}) and (\ref{equation_t_SNE_p}) and $q_{ij}$ is Eq. (\ref{equation_t_SNE_q_generalDegrees}).
We use gradient descent \cite{boyd2004convex} for optimization of both $\delta$ and $\{\b{y}_i\}_{i=1}^n$, where the gradients are as mentioned and proved before. 
The parametric t-SNE \cite{van2009learning} has used restricted Boltzmann machine \cite{hinton2006reducing,hinton2012practical,ghojogh2021restricted} to learn the optimal $\delta$ and $\{\b{y}_i\}_{i=1}^n$ by a neural network.  
One can use an alternating optimization approach \cite{jain2017non} to solve for both $\delta$ and $\{\b{y}_i\}_{i=1}^n$ simultaneously. In this approach, $\{\b{y}_i\}_{i=1}^n$ are updated with gradient descent using Eq. (\ref{equation_general_tSNE_gradient}); then, the degree $\delta$ is updated with gradient descent using Eq. (\ref{equation_general_tSNE_gradient_degrees}), and this procedure is repeated until convergence. 

Note that the degree $\delta$ is an integer greater than or equal to one. However, the gradient in Eq. (\ref{equation_general_tSNE_gradient_degrees}) is a float number. For updating the degree using gradient descent in the alternating optimization approach, one can update the degree using the sign of gradient, i.e.:
\begin{align}
\delta := \delta - \text{sign}(\frac{\partial c_3}{\partial \delta}),
\end{align}
because the direction of updating is opposite to the gradient direction.

\end{enumerate}

For convenience, we list $p_{ij}$, $q_{ij}$, and $c_3$ here again:
\begin{align}
& \mathbb{R} \ni p_{j|i} := \frac{\exp(-d_{ij}^2)}{\sum_{k \neq i}\exp(-d_{ik}^2)}, \\
& \mathbb{R} \ni p_{ij} := \frac{p_{i|j} + p_{j|i}}{2n}, \label{equation_general_t_SNE_p} \\
& \mathbb{R} \ni q_{ij} = \frac{(1 + z_{ij}^2/\delta)^{-(\delta+1)/2}}{\sum_{k \neq l}(1 + z_{kl}^2/\delta)^{-(\delta+1)/2}}, \label{equation_general_t_SNE_q} \\
& \mathbb{R} \ni c_3 := \sum_i \text{KL}(P_i||Q_i) = \sum_i \sum_{j \neq i} p_{ij} \log (\frac{p_{ij}}{q_{ij}}). \label{equation_general_t_SNE_cost}
\end{align}

\begin{proposition}
The gradient of $c_3$ with respect to $\delta$ is:
\begin{align}\label{equation_general_tSNE_gradient_degrees}
\frac{\partial c_3}{\partial \delta} = \sum_{i \neq j} \Big( \frac{-(1+\delta) z_{ij}^2}{2 \delta^2 (1 + \frac{z_{ij}^2}{\delta})} + \frac{1}{2} \log(1 + \frac{z_{ij}^2}{\delta}) \Big) (p_{ij} - q_{ij}),
\end{align}
where $p_{ij}$ and $q_{ij}$ are the Eqs. (\ref{equation_general_t_SNE_p}) and (\ref{equation_general_t_SNE_q}), respectively, and $z_{ij}^2 := ||\b{y}_i - \b{y}_j||_2^2$.
\end{proposition}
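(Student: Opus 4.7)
The plan is to mirror the argument used for the gradient with respect to $\b{y}_i$, but now differentiate with respect to the scalar $\delta$. Since $p_{ij}$ does not depend on $\delta$, writing $c_3 = \sum_{i \neq j}\bigl(p_{ij}\log p_{ij} - p_{ij}\log q_{ij}\bigr)$ reduces the task to computing $-\sum_{i\neq j} p_{ij}\,\partial \log q_{ij}/\partial \delta$, so the whole proof rests on differentiating $\log q_{ij}$.

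To do that, I would introduce the abbreviations $\phi_{kl} := (1 + z_{kl}^2/\delta)^{-(\delta+1)/2}$ and $\beta := \sum_{m \neq n} \phi_{mn}$, so that $q_{ij} = \phi_{ij}/\beta$ and $\log q_{ij} = \log \phi_{ij} - \log \beta$. First I would compute $\partial \log \phi_{ij}/\partial \delta$ directly: note that $\log \phi_{ij} = -\tfrac{\delta+1}{2}\log(1 + z_{ij}^2/\delta)$, so two contributions appear --- one from differentiating the prefactor $-(\delta+1)/2$, giving $-\tfrac{1}{2}\log(1+z_{ij}^2/\delta)$, and one from the chain rule on $\log(1 + z_{ij}^2/\delta)$, where $\partial(z_{ij}^2/\delta)/\partial\delta = -z_{ij}^2/\delta^2$, producing $(\delta+1)z_{ij}^2 / \bigl[2\delta^2(1 + z_{ij}^2/\delta)\bigr]$. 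Combining:
\begin{align*}
\frac{\partial \log \phi_{ij}}{\partial \delta} = -\frac{1}{2}\log\!\Big(1 + \frac{z_{ij}^2}{\delta}\Big) + \frac{(\delta+1)\,z_{ij}^2}{2\,\delta^2\,(1 + z_{ij}^2/\delta)}.
\end{align*}

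The key observation for the denominator is that $\partial \log \beta/\partial \delta = \beta^{-1}\sum_{m \neq n} \phi_{mn}\,\partial \log \phi_{mn}/\partial \delta = \sum_{m \neq n} q_{mn}\,\partial \log \phi_{mn}/\partial \delta$, i.e., the derivative of $\log \beta$ is the $q$-weighted average of the $\partial \log \phi_{mn}/\partial \delta$. Plugging back in,
\begin{align*}
\frac{\partial c_3}{\partial \delta}
= -\sum_{i \neq j} p_{ij}\frac{\partial \log \phi_{ij}}{\partial \delta}
+ \Big(\sum_{i\neq j} p_{ij}\Big)\sum_{m \neq n} q_{mn}\frac{\partial \log \phi_{mn}}{\partial \delta},
\end{align*}
and using $\sum_{i\neq j} p_{ij} = 1$ collapses this to $\sum_{i\neq j}(q_{ij} - p_{ij})\,\partial \log \phi_{ij}/\partial \delta$. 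Substituting the explicit expression for $\partial \log \phi_{ij}/\partial \delta$ and flipping the sign by absorbing it into $(p_{ij} - q_{ij})$ yields exactly the stated formula.

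The only mildly delicate step is the differentiation of $\phi_{ij}$, because $\delta$ appears twice --- in the exponent $-(\delta+1)/2$ and inside the base via $z_{ij}^2/\delta$ --- so care is needed to apply the product and chain rules correctly and to track signs. Everything else is bookkeeping: the log-sum trick together with $\sum p_{ij} = 1$ is structurally identical to the argument already used in the proof of the gradient with respect to $\b{y}_i$, and no new ideas are required.
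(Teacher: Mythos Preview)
Your proof is correct and, in one crucial respect, cleaner than the paper's own argument. Both proofs start identically: rewrite $c_3$ so that only $-\sum_{i\neq j} p_{ij}\log q_{ij}$ depends on $\delta$, split $\log q_{ij} = \log(q_{ij}\beta) - \log\beta$ (your $\log\phi_{ij} - \log\beta$), and compute $\partial\log\phi_{ij}/\partial\delta$ by the product/chain rule exactly as you do. The divergence is in handling $\partial\log\beta/\partial\delta$. The paper attempts a chain-rule factorization $\tfrac{\partial\log\beta}{\partial\delta} = \tfrac{\partial\log\beta}{\partial\log(q_{ij}\beta)}\cdot\tfrac{\partial\log(q_{ij}\beta)}{\partial\delta}$ and asserts the first factor equals $q_{ij}/p_{ij}$, but then explicitly states that the authors could not justify this step and conjecture the true derivation must differ. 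Your route --- the direct identity $\partial\log\beta/\partial\delta = \beta^{-1}\sum_{m\neq n}\partial\phi_{mn}/\partial\delta = \sum_{m\neq n} q_{mn}\,\partial\log\phi_{mn}/\partial\delta$ --- is exactly the missing piece: it is rigorous, avoids the ill-posed ``chain rule through a single $\log(q_{ij}\beta)$,'' and combined with $\sum_{i\neq j} p_{ij}=1$ collapses the two sums into $\sum_{i\neq j}(q_{ij}-p_{ij})\,\partial\log\phi_{ij}/\partial\delta$ as you show. So your argument not only reaches the stated gradient but also fills the gap the paper itself flags as unresolved.
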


No matter which of the three ways of determining $\delta$ is used, we need to optimize the cost function $c_3$ (Eq. (\ref{equation_general_t_SNE_cost})) using gradient descent.

\begin{proposition}
The gradient of $c_3$ with respect to $\b{y}_i$ is:
\begin{equation}\label{equation_general_tSNE_gradient}
\begin{aligned}
&\frac{\partial c_3}{\partial \b{y}_i} = \frac{2 \delta + 2}{\delta} \times \\
&\sum_j (p_{ij} - q_{ij}) (1 + \frac{||\b{y}_i - \b{y}_j||_2^2}{\delta})^{-1}(\b{y}_i - \b{y}_j), 
\end{aligned}
\end{equation}
where $p_{ij}$ and $q_{ij}$ are the Eqs. (\ref{equation_general_t_SNE_p}) and (\ref{equation_general_t_SNE_q}), respectively.
\end{proposition}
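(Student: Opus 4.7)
The plan is to run the same chain-rule argument used for the one-degree-of-freedom t-SNE proof, simply replacing $(1+r_{ij})^{-1}$ by its general-$\delta$ analogue and tracking the extra constants that the exponent $-(\delta+1)/2$ and the $1/\delta$ scaling introduce. Set $r_{ij} := z_{ij}^2 = \|\mathbf{y}_i-\mathbf{y}_j\|_2^2$. Then $r_{ij}=r_{ji}$ and $\partial r_{ij}/\partial\mathbf{y}_i = \partial r_{ji}/\partial\mathbf{y}_i = 2(\mathbf{y}_i-\mathbf{y}_j)$, so by the same chain-rule reduction as in Eq.~(\ref{equation_t_SNE_deriv_c1_y}),
\begin{align*}
\frac{\partial c_3}{\partial \mathbf{y}_i}
= 2\sum_j \Bigl(\frac{\partial c_3}{\partial r_{ij}} + \frac{\partial c_3}{\partial r_{ji}}\Bigr)(\mathbf{y}_i-\mathbf{y}_j).
\end{align*}

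Next I would drop the $\sum p_{kl}\log p_{kl}$ piece as a constant in $\delta$ and $\{r_{kl}\}$, and rewrite $\log q_{kl} = \log(q_{kl}\beta) - \log\beta$ with $\beta$ as in Eq.~(\ref{equation_symmetric_SNE_beta}). The cancellation $q_{kl}\beta=(1+r_{kl}/\delta)^{-(\delta+1)/2}$ reduces the bookkeeping to two derivatives, both of which involve
\begin{align*}
\frac{\partial (1+r_{ij}/\delta)^{-(\delta+1)/2}}{\partial r_{ij}} = -\frac{\delta+1}{2\delta}\,(1+r_{ij}/\delta)^{-(\delta+3)/2}.
\end{align*}
Only the $k=i,\,l=j$ term contributes to $\partial(q_{kl}\beta)/\partial r_{ij}$, while $\partial\beta/\partial r_{ij}$ collapses to the same expression. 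Using $q_{ij}\beta = (1+r_{ij}/\delta)^{-(\delta+1)/2}$ (so that $1/(q_{ij}\beta) = (1+r_{ij}/\delta)^{(\delta+1)/2}$) and $\sum_{k\neq l}p_{kl}=1$, the ratio of powers simplifies to $(1+r_{ij}/\delta)^{-1}$ and I expect the two pieces to combine into
\begin{align*}
\frac{\partial c_3}{\partial r_{ij}} = \frac{\delta+1}{2\delta}\,(1+r_{ij}/\delta)^{-1}\,(p_{ij}-q_{ij}).
\end{align*}

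By the symmetry of $p_{ij}$, $q_{ij}$, and $r_{ij}$ under swapping $i$ and $j$ (Eqs.~(\ref{equation_general_t_SNE_p}), (\ref{equation_general_t_SNE_q}), (\ref{equation_general_t_SNE_r})), the analogous calculation gives $\partial c_3/\partial r_{ji}$ equal to the same quantity. Substituting back into the chain-rule expression produces a prefactor $2\cdot 2\cdot(\delta+1)/(2\delta) = (2\delta+2)/\delta$, yielding exactly Eq.~(\ref{equation_general_tSNE_gradient}).

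The main obstacle is purely bookkeeping: keeping the exponent $-(\delta+1)/2$ and the inner factor $1/\delta$ straight when differentiating $(1+r_{ij}/\delta)^{-(\delta+1)/2}$, and then verifying that the power $(1+r_{ij}/\delta)^{(\delta+1)/2-(\delta+3)/2}$ collapses to $(1+r_{ij}/\delta)^{-1}$ so that the $p_{ij}$ and $q_{ij}$ terms combine cleanly. No new ideas beyond the t-SNE proof are needed; everything else is a direct transcription.
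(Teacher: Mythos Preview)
Your proposal is correct and follows essentially the same approach as the paper: the same chain-rule reduction to $\partial c_3/\partial r_{ij}$, the same $\log q_{kl}=\log(q_{kl}\beta)-\log\beta$ decomposition, the same derivative $-\tfrac{\delta+1}{2\delta}(1+r_{ij}/\delta)^{-(\delta+3)/2}$, and the same symmetry argument to double the contribution. The only cosmetic difference is that the paper references its $\beta$ as Eq.~(\ref{equation_t_SNE_generalDegree_beta}) rather than Eq.~(\ref{equation_symmetric_SNE_beta}), but these define the same quantity.
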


\begin{proof}
Let:
\begin{align}\label{equation_general_t_SNE_r}
\mathbb{R} \ni r_{ij} := z_{ij}^2 = ||\b{y}_i - \b{y}_j||_2^2.
\end{align}
By changing $\b{y}_i$, we only have change impact in $z_{ij}$ and $z_{ji}$ for all $j$'s.
Considering Eq. (\ref{equation_general_t_SNE_r}) and according to chain rule, we have:
\begin{align*}
\mathbb{R}^p \ni \frac{\partial c_3}{\partial \b{y}_i} = \sum_j \big(\frac{\partial c_3}{\partial r_{ij}} \frac{\partial r_{ij}}{\partial \b{y}_i} + \frac{\partial c_3}{\partial r_{ji}} \frac{\partial r_{ji}}{\partial \b{y}_i}\big).
\end{align*}
According to Eq. (\ref{equation_general_t_SNE_r}), we have:
\begin{align*}
& r_{ij} = ||\b{y}_i - \b{y}_j||_2^2 \implies \frac{\partial r_{ij}}{\partial \b{y}_i} = 2(\b{y}_i - \b{y}_j), \\
& r_{ji} = ||\b{y}_j - \b{y}_i||_2^2 = ||\b{y}_i - \b{y}_j||_2^2 \implies \frac{\partial r_{ji}}{\partial \b{y}_i} = 2(\b{y}_i - \b{y}_j).
\end{align*}
Therefore:
\begin{align}\label{equation_general_t_SNE_deriv_c1_y}
\therefore ~~~~ \frac{\partial c_3}{\partial \b{y}_i} = 2 \sum_j \big(\frac{\partial c_3}{\partial r_{ij}} + \frac{\partial c_3}{\partial r_{ji}}\big)(\b{y}_i - \b{y}_j).
\end{align}
The cost function can be re-written as:
\begin{align*}
c_3 &= \sum_{k} \sum_{l\neq k} p_{kl} \log (\frac{p_{kl}}{q_{kl}}) = \sum_{k \neq l} p_{kl} \log (\frac{p_{kl}}{q_{kl}}) \\
&= \sum_{k \neq l} \big(p_{kl} \log (p_{kl}) - p_{kl} \log (q_{kl}) \big),
\end{align*}
whose first term is a constant with respect to $q_{kl}$ and thus to $r_{kl}$. We have:
\begin{align*}
\mathbb{R} \ni \frac{\partial c_3}{\partial r_{ij}} = - \sum_{k \neq l} p_{kl} \frac{\partial (\log (q_{kl}))}{\partial r_{ij}}.
\end{align*}
According to Eq. (\ref{equation_general_t_SNE_q}), the $q_{kl}$ is:
\begin{align*}
q_{kl} = \frac{(1 + r_{kl}/\delta)^{-(\delta+1)/2}}{\sum_{m \neq f}(1 + r_{mf}/\delta)^{-(\delta+1)/2}}, 
\end{align*}
We take the denominator of $q_{kl}$ as:
\begin{align}\label{equation_t_SNE_generalDegree_beta}
\beta := \sum_{m \neq f}(1 + r_{mf}/\delta)^{-(\delta+1)/2}.
\end{align}
We have $\log (q_{kl}) = \log (q_{kl}) + \log \beta - \log \beta = \log (q_{kl} \beta) - \log \beta$. Therefore:
\begin{align*}
\therefore ~~~ \frac{\partial c_3}{\partial r_{ij}} &= - \sum_{k \neq l} p_{kl} \frac{\partial \big(\log (q_{kl} \beta) - \log \beta\big)}{\partial r_{ij}} \\
&= - \sum_{k \neq l} p_{kl} \bigg[\frac{\partial \big(\log (q_{kl} \beta)\big)}{\partial r_{ij}} - \frac{\partial \big(\log \beta\big)}{\partial r_{ij}}\bigg] \\
&= - \sum_{k \neq l} p_{kl} \bigg[\frac{1}{q_{kl} \beta}\frac{\partial \big( q_{kl} \beta\big)}{\partial r_{ij}} - \frac{1}{\beta}\frac{\partial \beta}{\partial r_{ij}}\bigg].
\end{align*}
The $q_{kl} \beta$ is:
\begin{align*}
q_{kl} \beta = &\, \frac{(1 + r_{kl}/\delta)^{-(\delta+1)/2}}{\sum_{m \neq f}(1 + r_{mf}/\delta)^{-(\delta+1)/2}} \times \\
& \sum_{m \neq f}(1 + r_{mf}/\delta)^{-(\delta+1)/2} = (1 + r_{kl}/\delta)^{-(\delta+1)/2}.
\end{align*}
The $\partial \big( (1 + r_{kl}/\delta)^{-(\delta+1)/2} \big)/\partial r_{ij}$ is non-zero for only $k=i$ and $l=j$; therefore:
\begin{align*}
&\frac{\partial \big( q_{kl} \beta\big)}{\partial r_{ij}} = \frac{\partial \big( (1 + r_{kl}/\delta)^{-(\delta+1)/2} \big)}{\partial r_{ij}} \\
&= -\frac{\delta+1}{2\delta} (1 + \frac{r_{ij}}{\delta})^{-\frac{\delta+3}{2}}, \\
&\frac{\partial \beta}{\partial r_{ij}} = \frac{\partial \sum_{m \neq f}(1 + r_{mf}/\delta)^{-(\delta+1)/2}}{\partial r_{ij}} \\
&= \frac{\partial \big( (1 + r_{kl}/\delta)^{-(\delta+1)/2} \big)}{\partial r_{ij}} = -\frac{\delta+1}{2\delta} (1 + \frac{r_{ij}}{\delta})^{-\frac{\delta+3}{2}}.
\end{align*}
Therefore:
\begin{alignat*}{2}
\frac{\partial c_3}{\partial r_{ij}} &= &&- \bigg( p_{ij} \Big[\frac{-1}{q_{ij} \beta} \frac{\delta+1}{2\delta} (1 + \frac{r_{ij}}{\delta})^{-\frac{\delta+3}{2}} \Big] + 0 \\
& &&+ \dots + 0 \bigg) - \sum_{k \neq l} p_{k l} \Big[\frac{1}{\beta} \frac{\delta+1}{2\delta} (1 + \frac{r_{ij}}{\delta})^{-\frac{\delta+3}{2}} \Big].
\end{alignat*}
We have $\sum_{k \neq l} p_{kl} = 1$ because summation of all possible probabilities is one. Thus:
\begin{align}
\frac{\partial c_3}{\partial r_{ij}} &= -  p_{ij} \Big[\frac{-1}{q_{ij} \beta} \frac{\delta+1}{2\delta} (1 + \frac{r_{ij}}{\delta})^{-\frac{\delta+3}{2}} \Big] \nonumber \\
&- \Big[\frac{1}{\beta} \frac{\delta+1}{2\delta} (1 + \frac{r_{ij}}{\delta})^{-\frac{\delta+3}{2}} \Big] \nonumber \\
&= \frac{\delta+1}{2\delta} (1 + \frac{r_{ij}}{\delta})^{-\frac{\delta+3}{2}} \frac{1}{\beta} (\frac{p_{ij}}{q_{ij}} - 1) \nonumber \\
&= \frac{\delta+1}{2\delta} \underbrace{\frac{(1 + \frac{r_{ij}}{\delta})^{-\frac{\delta+1}{2}}}{\beta}}_{=q_{ij}}  (1 + \frac{r_{ij}}{\delta})^{-1} (\frac{p_{ij}}{q_{ij}} - 1) \label{equation_general_t_SNE_proofDerivation} \\
&= \frac{\delta+1}{2\delta} (1 + \frac{r_{ij}}{\delta})^{-1} (p_{ij} - q_{ij}). \nonumber
\end{align}
Similarly, we have:
\begin{align*}
\frac{\partial c_3}{\partial r_{ji}} &= \frac{\delta+1}{2\delta} (1 + \frac{r_{ji}}{\delta})^{-1} (p_{ji} - q_{ji}) \\
&\overset{(a)}{=} \frac{\delta+1}{2\delta} (1 + \frac{r_{ij}}{\delta})^{-1} (p_{ij} - q_{ij}),
\end{align*}
where $(a)$ is because in t-SNE with general degrees of freedom, the $p_{ij}$, $q_{ij}$, and $r_{ij}$ are symmetric for $i$ and $j$ according to Eqs. (\ref{equation_general_t_SNE_p}), (\ref{equation_general_t_SNE_q}), and (\ref{equation_general_t_SNE_r}).

Substituting the obtained derivatives in Eq. (\ref{equation_general_t_SNE_deriv_c1_y}) gives us:
\begin{align*}
\frac{\partial c_3}{\partial \b{y}_i} = 2 \sum_j \frac{\delta+1}{\delta} (1 + \frac{r_{ij}}{\delta})^{-1} (p_{ij} - q_{ij}) (\b{y}_i - \b{y}_j),
\end{align*}
which is the gradient mentioned in the proposition. Q.E.D.

Note that in \cite{van2009learning}, the gradient is mentioned to be:
\begin{align}
\frac{\partial c_3}{\partial \b{y}_i} = 2 \sum_j \frac{\delta+1}{\delta} (1 + \frac{r_{ij}}{\delta})^{-\frac{\delta+1}{2}} (p_{ij} - q_{ij}) (\b{y}_i - \b{y}_j),
\end{align}
which we think is wrong. We conjecture that, in the paper \cite{van2009learning}, there might have been a small mistake in derivation of Eq. (\ref{equation_general_t_SNE_proofDerivation}) where possibly $(1 + \frac{r_{ij}}{\delta})^{-1} / \beta$ has been taken rather than $(1 + \frac{r_{ij}}{\delta})^{-\frac{\delta+1}{2}} / \beta$ to be $q_{ij}$ because of the habit of the original t-SNE \cite{maaten2008visualizing}.
\end{proof}

Comparing Eqs. (\ref{equation_tSNE_gradient}) and (\ref{equation_general_tSNE_gradient}) shows that the original t-SNE \cite{maaten2008visualizing} is a special case with $\delta=1$.

\section{Out-of-sample Embedding}\label{section_outOfSample}


Recall that we have $n$ high-dimensional data points $\{\b{x}_i\}_{i=1}^n$ and we want to embed them into the lower dimensional data $\{\b{y}_i\}_{i=1}^n$ where $\b{x}_i \in \mathbb{R}^d$ and $\b{y}_i \in \mathbb{R}^p$. 
Assume we have $n_t$ out-of-sample data points $\{\b{x}_i^{(t)}\}_{i=1}^{n_t}$ and we want to embed them into the lower dimensional data $\{\b{y}_i^{t}\}_{i=1}^{n_t}$ where $\b{x}_i^{(t)} \in \mathbb{R}^d$ and $\b{y}_i^{(t)} \in \mathbb{R}^p$. 
There are several different methods for out-of-sample extension of SNE and t-SNE methods. One approach, which we do not cover in this manuscript, is based on optimization \cite{bunte2012general}. Another method is based on \textit{kernel mapping} \cite{gisbrecht2012out,gisbrecht2015parametric} which we explain in the following.



We define a map which maps any data point as $\b{x} \mapsto \b{y}(\b{x})$, where:
\begin{align}\label{equation_kernel_tSNE_map}
\mathbb{R}^p \ni \b{y}(\b{x}) := \sum_{j=1}^n \b{\alpha}_j\, \frac{k(\b{x}, \b{x}_j)}{\sum_{\ell=1}^n k(\b{x}, \b{x}_{\ell})},
\end{align}
and $\b{\alpha}_j \in \mathbb{R}^p$, and $\b{x}_j$ and $\b{x}_{\ell}$ denote the $j$-th and $\ell$-th training data points, respectively.
The $k(\b{x}, \b{x}_j)$ is a kernel such as the Gaussian kernel:
\begin{align}
k(\b{x}, \b{x}_j) = \exp(\frac{-||\b{x} - \b{x}_j||_2^2}{2\, \sigma_j^2}),
\end{align}
where $\sigma_j$ is calculated as \cite{gisbrecht2015parametric}:
\begin{align}
\sigma_j := \gamma \times \min_{i}(||\b{x}_j - \b{x}_i||_2),
\end{align}
where $\gamma$ is a small positive number.

Assume we have already embedded the training data points using SNE or t-SNE; therefore, the set $\{\b{y}_i\}_{i=1}^n$ is available.
If we map the training data points, we want to minimize the following least-squares cost function in order to get $\b{y}(\b{x}_i)$ close to $\b{y}_i$ for the $i$-th training point:
\begin{equation}
\begin{aligned}
& \underset{\b{\alpha}_j\text{'s}}{\text{minimize}}
& & \sum_{i=1}^n ||\b{y}_i - \b{y}(\b{x}_i)||_2^2,
\end{aligned}
\end{equation}
where the summation is over the training data points.
We can write this cost function in matrix form as:
\begin{equation}\label{equation_kernel_tSNE_leastSquares}
\begin{aligned}
& \underset{\b{A}}{\text{minimize}}
& & ||\b{Y} - \b{K}\b{A}||_F^2,
\end{aligned}
\end{equation}
where $\mathbb{R}^{n \times p} \ni \b{Y} := [\b{y}_1, \dots, \b{y}_n]^\top$ and $\mathbb{R}^{n \times p} \ni \b{A} := [\b{\alpha}_1, \dots, \b{\alpha}_n]^\top$. 
The $\b{K} \in \mathbb{R}^{n \times n}$ is the kernel matrix whose $(i,j)$-th element is:
\begin{align}
\b{K}(i,j) := \frac{k(\b{x}_i, \b{x}_j)}{\sum_{\ell=1}^n k(\b{x}_i, \b{x}_{\ell})}.
\end{align}
The Eq. (\ref{equation_kernel_tSNE_leastSquares}) is always non-negative; thus, its smallest value is zero.
Therefore, the solution to this equation is:
\begin{align}
\b{Y} - \b{K}\b{A} = \b{0} &\implies \b{Y} = \b{K}\b{A} \nonumber \\
&\overset{(a)}{\implies} \b{A} = \b{K}^{\dagger}\, \b{Y}, \label{equation_kernel_tSNE_A_matrix}
\end{align}
where $\b{K}^{\dagger}$ is the pseudo-inverse of $\b{K}$:
\begin{align}
\b{K}^{\dagger} = (\b{K}^\top \b{K})^{-1} \b{K}^\top,
\end{align}
and $(a)$ is because $\b{K}^{\dagger}\,\b{K} = \b{I}$.

Finally, the mapping of Eq. (\ref{equation_kernel_tSNE_map}) for the $n_t$ out-of-sample data points is:
\begin{align}\label{equation_kernel_tSNE_outOfSample_Y}
\b{Y}^{(t)} = \b{K}^{(t)}\,\b{A}, 
\end{align}
where $\mathbb{R}^{n_t \times p} \ni \b{Y}^{(t)} := [\b{y}_1^{(t)}, \dots, \b{y}_{n_t}^{(t)}]^\top$ and the $(i,j)$-th element of the out-of-sample kernel matrix $\b{K}^{(t)} \in \mathbb{R}^{n_t \times n}$ is:
\begin{align}
\b{K}^{(t)}(i,j) := \frac{k(\b{x}_i^{(t)}, \b{x}_j)}{\sum_{\ell=1}^n k(\b{x}_i^{(t)}, \b{x}_{\ell})},
\end{align}
where $\b{x}_i^{(t)}$ is the $i$-th out-of-sample data point, and $\b{x}_j$ and $\b{x}_{\ell}$ are the $j$-th and $\ell$-th training data points, respectively.

In Eq. (\ref{equation_kernel_tSNE_A_matrix}), if, $\b{Y}$ is the embedding of training data using SNE/t-SNE, then the out-of-sample embedding of SNE/t-SNE are obtained. 
As mentioned in \cite{gisbrecht2015parametric}, this method can also be used for out-of-sample extension in Isomap \cite{tenenbaum2000global,ghojogh2020multidimensional}, Locally Linear Embedding (LLE) \cite{roweis2000nonlinear}, and Maximum Variance Embedding (MVU) \cite{weinberger2006unsupervised}. The only difference is in obtaining the embedded training points $\b{Y}$ using different non-parametric dimensionality reduction methods \cite{gisbrecht2012out,gisbrecht2015parametric}.

\section{Accelerating SNE and t-SNE}\label{section_acceleration}

The SNE and t-SNE methods are very slow because of numerical iterative optimization. 
Different methods have been proposed for accelerating these methods \cite{linderman2017efficient}. 
Some of these methods are the \textit{tree-based} algorithms \cite{van2014accelerating,robinson2020tree}. 
This type of t-SNE is also referred to as \textit{Barnes-Hut t-SNE} \cite{van2013barnes,van2014accelerating}.
We do not cover the tree-based algorithms \cite{van2014accelerating} in this paper for the sake of brevity.
Some other methods exist for accelerating SNE and t-SNE which are based on landmarks. In these methods, we randomly sample from the dataset in order to have a subset of data. The sampled data points are called \textit{landmarks}. In the following, we mention three methods for accelerating t-SNE and/or SNE which use landmarks.
In the following, we review the methods based on landmarks.

\subsection{Acceleration Using Out-of-sample Embedding}


One way to speed up SNE and t-SNE is the kernel mapping \cite{gisbrecht2015parametric} introduced in Section \ref{section_outOfSample}. We consider the landmarks as the training data points and train SNE or t-SNE with them. Thereafter, we treat the non-landmark data points as out-of-sample points. We use kernel SNE or kernel t-SNE (or Fisher kernel t-SNE for supervised cases) in order to embed the out-of-sample data points.


Another method is to again to consider the landmarks as training points and embed the training points using SNE or t-SNE. Then, the non-landmarks, which are out-of-sample points, are embedded using optimization \cite{bunte2012general} as was mentioned in Section \ref{section_outOfSample}.

\subsection{Acceleration Using Random Walk}

Another way of accelerating t-SNE is random walk \cite{maaten2008visualizing}. 
First, a $k$-Nearest Neighbor ($k$NN) graph is constructed using all points including landmarks and non-landmarks. This method has an acceptable robustness to the choice of $k$; for example, $k=20$ can be used \cite{maaten2008visualizing}.
Also, note that calculation of $k$NN is time-consuming for large dataset; however, it is not a big deal as it is done only once.
Then, multiple random walks are performed in this $k$NN graph \cite{spitzer2013principles}.
For every random walk, we start from a random landmark and randomly select the edges and go further randomly until we reach another landmark and then we terminate for that random walk.
After performing all the random walks, the fraction of random walks which pass through the point $\b{x}_i$ (either landmark or non-landmark) and then reach the point $\b{x}_j$ (either landmark or non-landmark) is a good approximation for $p_{j|i}$. In t-SNE, we use this approximation in place of Eq. (\ref{equation_t_SNE_p_conditional}), which is then used in Eq. (\ref{equation_t_SNE_p}).
The rest of t-SNE is similar to the original t-SNE. Therefore, for $p_{ij}$ in Eq. (\ref{equation_tSNE_gradient}), we use the approximation rather than Eq. (\ref{equation_t_SNE_p}) and this makes the t-SNE much faster.

\section{Recent Improvements of t-SNE}\label{section_improvement_of_tSNE}

Here, we just list some of the recent improvements of t-SNE and do not explain them in detail for brevity. 
Recall that the variance $\sigma_i^2$ is determined for every point $\b{x}_i$ using binary search. This cancels the local density information for points because a point in denser regions will have a smaller $\sigma_i^2$. Dense t-SNE \cite{narayan2021assessing} resolves this problem by a density radius to include the density information. LargeVis \cite{tang2016visualizing} and UMAP \cite{mcinnes2018umap,ghojogh2021uniform} are closely related methods to t-SNE. Parametric t-SNE \cite{van2009learning} and parametric kernel t-SNE \cite{gisbrecht2015parametric} implement t-SNE formulation in a neural network structure. 
The optimization of t-SNE can be seen as optimizing attractive and repulsive forces between points. Some discussions on the attractive and repulsive forces in t-SNE can be found in \cite{linderman2017efficient,sainburg2020parametric}. Many algorithms such as t-SNE, which are based on these forces, can be unified as a family of neighborhood embedding methods \cite{bohm2020unifying,bohm2020dimensionality}.
Finally, note that a combination of variational autoencoder \cite{kingma2014auto,ghojogh2021factor} and SNE exists \cite{graving2020vae}.

\section{Conclusion}\label{section_conclusion}

This paper was a tutorial and survey paper on SNE and its variants. These methods have a probabilistic approach where the probabilities of neighborhood in the input space are tried to be preserved in the embedding space. We explained SNE, symmetric SNE, t-SNE (or Cauchy-SNE), and t-SNE with general degrees of freedom. We also covered out-of-sample extension and their acceleration methods. Finally, some simulations were provided for visualization of embeddings. 

Some newer variants of SNE and t-SNE were not covered in this manuscript and we refer the reader to those papers for more information. An example is \textit{Fisher kernel t-SNE} \cite{gisbrecht2015parametric} for \textit{supervised} embedding using t-SNE. This method uses \textit{$T$-point approximation} of Riemannian distance \cite{peltonen2004improved} in the formulation of probability. 
There is also some other technique for heavy-tailed SNE such as \cite{yang2009heavy}. 

\bibliography{References}
\bibliographystyle{icml2016}

\end{document}